\newcommand{\G}{\mathcal{G}}
\newcommand{\dG}{d^{\scriptscriptstyle G}}
\newcommand{\etaG}{\eta^{\scriptscriptstyle G}}
\newcommand{\R}[1]{\mathbb{R}^{#1}}
\newcommand{\RR}[2]{\mathbb{R}^{#1 \times #2}}
\newcommand{\NormUn}[1]{\left\|#1\right\|_1}
\newcommand{\NormDeux}[1]{\left\|#1\right\|_2}
\newcommand{\NormAlpha}[1]{\left\|#1\right\|_{\alpha}}
\newcommand{\NormFro}[1]{\left\|#1\right\|_{\mathrm{F}}}
\newcommand{\NormOmega}[2]{\sum_{G \in #1}\NormDeux{\dG \circ #2}}
\newcommand{\Diag}[1]{\mathrm{Diag} \! \left(#1\right)}
\newcommand{\Support}[1]{\mathrm{supp} \! \left(#1\right)}
\newcommand{\IntegerSet}[1]{\{1,\dots,#1\}}
\newtheorem{thm}{Theorem}[section]
\newtheorem{lem}[thm]{Lemma}
\newcommand{\MyEq}[1]{Eq.~(\ref{#1})}
\newcommand{\MyLem}[1]{Lemma~(\ref{#1})}
\newcommand{\MyFig}[1]{Fig.~\ref{#1}}
\newcommand{\MySec}[1]{Section~\ref{#1}}
\newcommand{\MyAlg}[1]{Algorithm~\ref{#1}}
\DeclareMathOperator*{\argmin}{arg\,min}
\author{
\begin{tabular}{lcr}
 & & \\ 
\textbf{Rodolphe Jenatton$^{1}$}     &    &  rodolphe.jenatton@inria.fr \\
\textbf{Guillaume Obozinski$^{1}$}   &    &  guillaume.obozinski@inria.fr \\
\textbf{Francis Bach$^{1}$}          &    &  francis.bach@inria.fr \\
\multicolumn{3}{l}{\small \textit{$^1$INRIA - WILLOW Project-team,}} \\
\multicolumn{3}{l}{\small \textit{Laboratoire d'Informatique de l'Ecole Normale Sup\'erieure (INRIA/ENS/CNRS UMR 8548),}} \\
\multicolumn{3}{l}{\small \textit{23, avenue d'Italie, 75214 Paris. France}} \\
 & & 
\end{tabular}
}
\title{Structured Sparse Principal Component Analysis}
\begin{document}

\maketitle

\begin{abstract}

We present an extension of sparse PCA, or sparse dictionary learning,
where the sparsity patterns of all dictionary elements are structured and
constrained to belong to a prespecified set of shapes.
This \emph{structured sparse PCA} is based on a structured regularization recently introduced by
\cite{GrosLasso}.
While classical sparse priors only deal with \textit{cardinality}, 
the regularization we use encodes higher-order information about the data.
We propose an efficient and simple optimization procedure to solve this
problem.
Experiments with two practical tasks, face recognition and the study of
the dynamics of a protein complex, demonstrate the benefits of the
proposed structured approach over unstructured approaches.

\end{abstract}

\section{Introduction}
Principal component analysis (PCA) is an essential tool for data analysis
and unsupervised dimensionality reduction, whose goal is to find,
among  linear combinations of the data variables, a sequence of orthogonal
factors that most efficiently explain the variance of the observations.

One of its main shortcomings is that, even if PCA finds a small number of
important factors, the factor themselves typically involve all original
variables.
In the last decade, several alternatives to PCA which find sparse and potentially interpretable factors have been proposed, notably non-negative matrix factorization (NMF)
\cite{NMF} and sparse PCA (SPCA) \cite{jolliffe2003mpc,zou2006spc,Witten}.

However, in many applications, only constraining the size of the factors
does not seem appropriate because the considered factors are not only expected
to be sparse but also to have a certain structure.
In fact, the popularity of NMF for face image analysis owes essentially to
the fact that the method happens to retrieve sets of variables that are
localized on the face and capture some features or parts of the face which
seem intuitively meaningful given our a priori. We might therefore gain in
the quality of the factors induced by enforcing directly this a priori in
the matrix factorization constraints. More generally, it is desirable to
encode higher-order information about the supports that reflects the
\textit{structure} of the data. For example, in computer vision, features
associated to the pixels of an image are naturally organized on a grid and
the supports of factors explaining the variability of images could be
expected to be localized, connected or have some other regularity with
respect to the grid. Similarly, in genomics, factors explaining the gene
expression patterns observed on a microarray could be expected to involve
groups of genes corresponding to biological pathways or set of genes that
are neighbors in a protein-protein interaction network.

Recent research on structured sparsity \cite{LaurentGuillaumeGroupLasso,
huang2009, GrosLasso} has highlighted the benefit of exploiting such
structure for variable selection and prediction in the context of
regression and classification.
In particular, \cite{GrosLasso} shows that, given any intersection-closed family of
patterns $\mathcal{P}$ of variables, such as all the rectangles on a
2-dimensional grid of variables, it is possible to build an ad hoc
regularization norm $\Omega$ that enforces that the support of the
solution of the least-squares regression regularized by $\Omega$ belongs
to the family~$\mathcal{P}$.

Capitalizing on these results, we aim in this paper to go beyond sparse PCA and
propose \textit{structured sparse PCA} (SSPCA), which
explains the variance of the data by factors that are not only sparse but
also respect some a priori structural constraints deemed relevant to model
the data at hand. We show how slight variants of the regularization term
of \cite{GrosLasso} can be used successfully to yield a structured and
sparse formulation of principal component analysis for which we propose a
simple and efficient optimization scheme.

The rest of the paper is organized as follows:
\MySec{sec:problem_statement} introduces the SSPCA problem in the
dictionary learning framework, summarizes the regularization considered in
\cite{GrosLasso} and its essential properties, and presents some simple
variants which are more effective in the context of PCA.
\MySec{sec:optimization} is dedicated to our optimization scheme for
solving SSPCA.
Our experiments in \MySec{sec:experiments} illustrate the benefits of our
approach through applications to face recognition and the study of the
dynamics of protein complexes.

\paragraph{Notation:} For any vector $y$ in $\R{p}$ and any $\alpha\!>\!
0$, we denote by $\NormAlpha{y}=(\sum_{j=1}^p |y_j|^\alpha)^{1/\alpha}$
the (quasi-)norm $\ell_\alpha$ of $y$.
Similarly, for any rectangular matrix $Y \in \RR{n}{p}$, we denote by
$\NormFro{Y}=(\sum_{i=1}^n\sum_{j=1}^p Y_{ij}^2)^{1/2}$ its Frobenius
norm, where $Y_{ij}$ is the $(i,j)$-th element of $Y$.
We write $Y^j$ for the $j$-th column of $Y$.
Given  $w$ in $\R{p}$ and a subset $J$ of $\IntegerSet{p}$, $w_J$ denotes
the vector in $\R{p}$ that has the same entries $w_j$ as $w$ for $j \in
J$, and null entries outside of $J$.
In addition, $\Support{w}=\{j\in\IntegerSet{p}\, ;\, w_j \neq 0 \}$ is
referred to as the \emph{support}, or \emph{nonzero pattern} of the vector
$w \in \R{p}$.
For any finite set $A$ with cardinality $|A|$, we also define the
$|A|$-tuple $(y^a)_{a \in A} \in \RR{p}{|A|}$ as the collection of
$p$-dimensional vectors $y^a$ indexed by the elements of $A$.
Furthermore, for two vectors $x$ and $y$ in $\R{p}$, we denote by $x \circ
y = (x_1y_1,\dots,x_p y_p)^\top \in \R{p}$ the elementwise product of $x$
and $y$.
Finally, we extend $\frac{a}{b}$ by continuity in zero with
$\frac{a}{0}=\infty$ if $a \neq 0$ and $0$ otherwise.

\section{Problem statement}\label{sec:problem_statement}

It is useful to distinguish two conceptually different interpretations of
PCA. In terms of \emph{analysis}, PCA sequentially projects the data on
subspaces that explain the largest fraction of the variance of the data. In
terms of \emph{synthesis}, PCA finds a basis, or orthogonal dictionary,
such that all signals observed admit decompositions with low
reconstruction error. These two interpretations recover the same basis of
principal components for PCA but lead to different formulations for \emph{sparse}
PCA. The \emph{analysis} interpretation leads to sequential formulations (\cite{spca2008aspremont,moghaddam2006sbs,jolliffe2003mpc})
that consider components one at a time and perform a \emph{deflation}  of the
covariance matrix at each step (see \cite{Mackey}). The \emph{synthesis}
interpretation leads to non-convex global formulations (\cite{zou2006spc,JulienOnLine,moghaddam2006sbs,lee}) which estimate
simultaneously all principal components, often drop the orthogonality
constraints, and are referred to as matrix factorization problems (\cite{singh2008uvm}) in
machine learning, and dictionary learning in signal processing.

The approach we propose fits more naturally in the framework of dictionnary learning, whose terminology we now introduce.

\subsection{Matrix factorization and dictionary learning}

 Given a matrix $X\in \RR{n}{p}$ of $n$ rows corresponding to $n$ observations in $\R{p}$, the dictionary learning problem is to find a matrix $V \in \RR{p}{r}$, called the \emph{dictionary}, such that each observation can be well approximated by a linear combination of the $r$ columns $(V^k)_{k \in \{1,\ldots,r\}}$ of $V$ called the \emph{dictionary elements}. If $U \in \RR{n}{r}$ is the matrix of the linear combination coefficients or \emph{decomposition coefficients}, the matrix product $UV^\top$ is called a decomposition of $X$.

Learning simultaneously the dictionary $V$ and the decomposition $U$ corresponds to a matrix factorization problem (see \cite{Witten} and reference therein). 
As formulated in \cite{bach2008csm} or \cite{Witten}, it is natural, when learning a decomposition, to penalize or constrain some norms or quasi-norms of $U$ and $V$, say  $\Omega_u$ and $\Omega_v$ respectively, to encode prior information---typically sparsity---about the decomposition of~$X$. This can be written generally as
\begin{equation}\label{eq:main_eq}
      \!\!\!\! \min_{  U \in \RR{n}{r},\, V \in \RR{p}{r}  }
                    \frac{1}{2np} \NormFro{ X \! - \! U V^\top \! }^2 + \lambda \sum_{k=1}^r \Omega_v(V^k) \qquad \mathrm{s.t.} \qquad  \forall k,\,\Omega_u(U^k) \, \leq \, 1,
\end{equation}
where the regularization parameter $\lambda \geq 0$ controls which extent the dictionary is regularized\footnote{From \cite{bach2008csm}, we know that our formulation is also equivalent to two unconstrained problems, with the penalizations
$\frac{\lambda}{2} \sum_{k=1}^r [\Omega_v(V^k)]^2 \! + \! [\Omega_u(U^k)]^2$ or $\lambda \sum_{k=1}^r \Omega_v(V^k) \Omega_u(U^k)$.  }.
If we assume that both regularizations $\Omega_u$ and $\Omega_v$ are convex, problem (\ref{eq:main_eq})  is convex w.r.t. $U$ for $V$ fixed and vice versa. It is however not \emph{jointly} convex in $(U,V)$.

The formulation of sparse PCA considered in \cite{lee} corresponds to a particular instance of this problem, where the dictionary elements are required to be sparse (without the orthogonality constraint $V^\top V = I$). This can be achieved by penalizing the columns of $V$ by a sparsity-inducing norm, e.g., the $\ell_1$ norm, $\Omega_v(V^k)=\NormUn{V^k}$.
In the next section we consider a regularization $\Omega_v$ which controls not only the sparsity but also the structure of the supports of dictionary elements.
\subsection{Structured sparsity-inducing norms}
The work of \cite{GrosLasso} considered a norm which induces structured sparsity in the following sense: the solutions to a learning problem regularized by this norm have a sparse support which moreover belongs to a certain set of groups of variables. Interesting sets of possible supports include set of variables forming rectangles when arranged on a grid and more generally convex subsets\footnote{We use the term \emph{convex} informally here. It can however be made precise with the notion of convex subgraphs (\cite{chung1997sgt}).}.

The framework of \cite{GrosLasso} can be summarized as follows:
if we denote by $\G$ a subset of the power set of $\IntegerSet{p}$, such that $ \bigcup_{G \in \G} G = \IntegerSet{p}$, we define a norm $\Omega$ on a vector $y \in \R{p}$ as

$$
\Omega(y) = \sum_{G\in\G} \bigg\{ \sum_{j\in G} (\dG_j)^2 |y_j|^2 \bigg\}^{\frac{1}{2}} = \NormOmega{\G}{y},
$$
where $(\dG)_{G\in\G} \in \RR{p}{|\G|}$ is a $|\G|$-tuple of $p$-dimensional vectors such that $\dG_j > 0$ if $j \in G$ and $\dG_j = 0$ otherwise.
This norm $\Omega$ linearly combines the $\ell_2$ norms of possibly overlapping groups of variables, with variables in each group being weighted by $(\dG)_{G\in\G}$. Note that a same variable $y_j$ belonging to two different groups $G_1, G_2 \in \G$ is allowed to be weighted differently in $G_1$ and $G_2$ (by respectively $d_j^{\scriptscriptstyle G_1}$ and $d_j^{\scriptscriptstyle G_2}$).

For specific choices of $\G$, $\Omega$ leads to standard sparsity-inducing norms. For example, when $\G$ is the set of all singletons, $\Omega$ is the usual $\ell_1$ norm (assuming that all the weights are equal to 1).

We focus on the case of a 2-dimensional grid where the set of groups $\G$ is the set of all horizontal and vertical half-spaces (see \MyFig{fig:rectangular_group_illustration} taken from \cite{GrosLasso}).
As proved in \cite[Theorem 3.1]{GrosLasso}, the $\ell_1/\ell_2$ norm $\Omega$ sets to zero some groups of variables $\NormDeux{  \dG \circ y }$, i.e., some entire horizontal and vertical half-spaces of the grid, and therefore induces rectangular nonzero patterns.
Note that a broader set of convex patterns can be obtained by adding in $\G$ half-planes with other orientations. In practice, we use planes with angles which are multiples of $\frac{\pi}{4}$.

\begin{figure}[!h]
    \begin{center}
    	 \includegraphics[scale=0.5]{./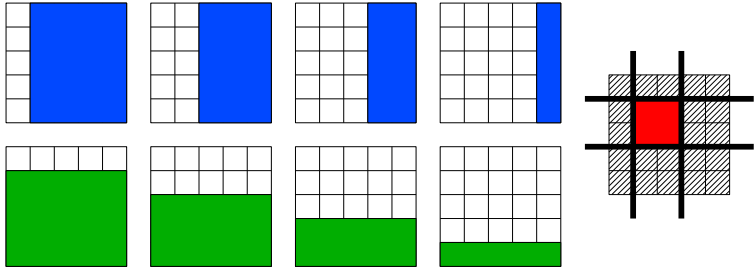}
	  \caption{(Left) The set of blue and green groups with their (not displayed) complements to penalize to select rectangles. (Right) In red, an example of recovered pattern in this setting.} 
    	\label{fig:rectangular_group_illustration}
    \end{center}
\end{figure}

Among sparsity inducing regularizations, $\ell_1$ is often privileged since it is convex. However, so-called concave penalizations, such as penalization by an $\ell_\alpha$ quasi-norm, which are closer to $\ell_0$ and penalize more aggressively small coefficients can be preferred, especially in a context where the unregularized problem, here dictionary learning is itself non convex.
In light of recent work showing the advantages of addressing sparse regression problems through concave penalization (e.g., see \cite{zhang2008msc}), we therefore generalize $\Omega$ to a family of non-convex regularizers as follows:
for $\alpha \in (0,1)$, we define the quasi-norm $\Omega^\alpha$ for all vectors $y \in \R{p}$ as
$$
	\Omega^\alpha(y) = \bigg\{ \sum_{G\in\G} \NormDeux{ \dG \circ y }^\alpha  \bigg\}^{\frac{1}{\alpha}}
			 = \NormAlpha{ \, (\NormDeux{ \dG \circ y })_{G \in \G} \, },
$$
where we denote by $(\NormDeux{ \dG \circ y })_{G \in \G} \in \RR{1}{|\G|}$ the $|\G|$-tuple composed of the different blocks $\NormDeux{ \dG \circ y }$. We thus replace the (convex) $\ell_1/\ell_2$ norm $\Omega$ by the (neither convex, nor concave) $\ell_\alpha/\ell_2$ quasi-norm $\Omega^\alpha$. Note that this modification impacts the sparsity induced at the level of groups, since we have replaced the convex  $\ell_1$ norm by the concave $\ell_\alpha$ quasi-norm.


\section{Optimization}\label{sec:optimization}

We consider the optimization of \MyEq{eq:main_eq} where we use $\Omega_v=\Omega^\alpha$ to regularize the dictionary $V$. We discuss in \MySec{sec:algorithm} which norms $\Omega_u$ we can handle in this optimization framework.

\subsection{Formulation as a sequence of convex problems}

We are now considering \MyEq{eq:main_eq} where we take $\Omega_v$ to be $\Omega^\alpha$, that is,
\begin{equation}\label{eq:main_eq_new_norm}
      \min_{  U \in \RR{n}{r},\, V \in \RR{p}{r}  }
              \frac{1}{2np} \NormFro{ X \! - \! U V^\top \! }^2 + \lambda \sum_{k=1}^r \Omega^\alpha(V^k) \qquad \mathrm{s.t.} \qquad  \forall k,\,\Omega_u(U^k) \, \leq \, 1.
\end{equation}
Although the minimization problem \MyEq{eq:main_eq_new_norm} is still convex in $U$ for $V$ fixed, the converse is not true anymore because of $\Omega^\alpha$. Indeed, the formulation in $V$ is non-differentiable and non-convex.
To address this problem, we use the variational equality based on the following lemma that is related\footnote{Note that we depart form \cite{bach2008cgl, micchelli2006lkf} who consider a quadratic upperbound on the \textit{squared} norm. We prefer to remain in the standard dictionary learning framework where the penalization is not squared.} to ideas from \cite{bach2008cgl, micchelli2006lkf}:

\begin{lem}\label{lem:EtaTrick}
Let $\alpha \in (0,2)$ and $\beta = \frac{\alpha}{2-\alpha}$. For any vector $y \in \R{p}$, we have the following equality
$$
 \NormAlpha{y} = \min_{ z  \in \mathbb{R}_+^p  } \,
                       \frac{1}{2}\sum_{j=1}^p \frac{y_j^2}{z_j} + \frac{1}{2} \|z\|_{\beta} ,
$$
and the minimum is uniquely attained for
$
  z_j = |y_j|^{2-\alpha}\NormAlpha{y}^{\alpha-1},\, \forall j \in \IntegerSet{p}.
$
\end{lem}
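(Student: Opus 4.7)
The plan is to prove the inequality $\geq \NormAlpha{y}$ by combining H\"older's inequality with the scalar AM--GM inequality, and then to verify that the candidate $z^*_j = |y_j|^{2-\alpha}\NormAlpha{y}^{\alpha-1}$ saturates both. This route deliberately sidesteps any convexity argument, which is welcome because for $\alpha \in (0,1)$ one has $\beta < 1$, so $\|z\|_\beta$ is only a quasi-norm in $z$. Zero coordinates can be disposed of at the outset: if $y_j = 0$, the objective is non-decreasing in $z_j$ (under the convention $0/0 = 0$), forcing $z_j = 0$ at any optimum; one may therefore assume $y_j \neq 0$ for all $j \in \IntegerSet{p}$.

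For the lower bound, I would start from the pointwise factorization $|y_j|^\alpha = (y_j^2/z_j)^{\alpha/2}\, z_j^{\alpha/2}$ and apply H\"older's inequality with conjugate exponents $p = 2/\alpha$ and $q = 2/(2-\alpha)$, both lying in $(1,\infty)$ since $\alpha \in (0,2)$. The identity $\beta(2-\alpha) = \alpha$, immediate from $\beta = \alpha/(2-\alpha)$, lets $\sum_j z_j^{\alpha/(2-\alpha)}$ collapse to $\|z\|_\beta^\beta$, and after raising to the power $2/\alpha$, H\"older delivers
$$\NormAlpha{y}^2 \,\leq\, \Big(\sum_{j=1}^p \frac{y_j^2}{z_j}\Big)\,\|z\|_\beta.$$
The scalar AM--GM inequality $\tfrac{1}{2}(a + b) \geq \sqrt{ab}$ applied with $a = \sum_j y_j^2/z_j$ and $b = \|z\|_\beta$ then yields $\tfrac{1}{2}\sum_j y_j^2/z_j + \tfrac{1}{2}\|z\|_\beta \geq \sqrt{ab} \geq \NormAlpha{y}$, which is the claimed lower bound.

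For equality and uniqueness, H\"older's equality condition forces $y_j^2/z_j \propto z_j^\beta$, i.e., $z_j = c\,|y_j|^{2-\alpha}$ for some $c > 0$, while AM--GM equality requires $\sum_j y_j^2/z_j = \|z\|_\beta$; substituting the proportional form into the latter turns both sides into explicit powers of $c$ and $\NormAlpha{y}$, and equating them pins down $c = \NormAlpha{y}^{\alpha-1}$, recovering the minimizer stated in the lemma. A direct substitution then confirms that both terms of the objective equal $\tfrac{1}{2}\NormAlpha{y}$. The main obstacle is not conceptual but the arithmetic bookkeeping around $\beta(2-\alpha) = \alpha$, invoked repeatedly to collapse exponents; a direct KKT attack would be tempting but is awkward due to the non-convexity of $\|z\|_\beta$ for $\alpha < 1$, whereas this two-step inequality route produces both the bound and its equality case in one stroke.
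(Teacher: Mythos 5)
Your proof is correct, and it takes a genuinely different route from the paper's. The paper argues via calculus: it first establishes that the infimum of $\psi(z) = \sum_j y_j^2 z_j^{-1} + \|z\|_\beta$ is attained (coercivity as $\|z\|_\beta \to \infty$, blow-up as $z_j \to 0$ when $y_j \neq 0$), then sets the partial derivatives to zero and reads off the unique critical point. You instead prove the global lower bound directly by H\"older with exponents $2/\alpha$ and $2/(2-\alpha)$ followed by AM--GM, and extract the minimizer from the two equality conditions. Your exponent bookkeeping checks out: $\beta(2-\alpha)=\alpha$ collapses $\bigl(\sum_j z_j^{\alpha/(2-\alpha)}\bigr)^{(2-\alpha)/2}$ to $\|z\|_\beta^{\alpha/2}$, H\"older's equality condition gives $z_j \propto |y_j|^{2/(1+\beta)} = |y_j|^{2-\alpha}$, and matching $\sum_j y_j^2/z_j = \|z\|_\beta$ yields $c^2 = \NormAlpha{y}^{2\alpha-2}$, i.e.\ $c = \NormAlpha{y}^{\alpha-1}$. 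What your route buys is a cleaner treatment of global optimality and uniqueness: for $\alpha<1$ one has $\beta<1$, the objective is non-convex in $z$, and the paper's jump from ``unique stationary point'' to ``unique minimum'' leans implicitly on the attainment argument; your inequality chain certifies that every non-optimal $z$ is strictly worse (strict H\"older if the proportionality fails, strict AM--GM if $a \neq b$) without any differentiability or coercivity considerations. What the paper's route buys is brevity and a template that extends to the variational reformulations used later in the text. Your handling of the zero coordinates ($y_j=0$ forces $z_j=0$ by monotonicity of $\|z\|_\beta$, consistent with the stated minimizer) is also sound.
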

\begin{proof}
Let $\psi: \! z \! \mapsto \sum_{j=1}^p y_j^2 z_j^{-1} + \|z\|_{\beta}$ be the continuously differentiable function defined on $(0,+\infty)$.
We have
$ \lim_{\|z\|_{\beta}  \to \infty} \psi(z) \! = \! +\infty $
and
$ \lim_{z_j \to 0} \psi(z) \! = \! +\infty $ if $y_j \neq 0$
(for $y_j=0$, note that $\min_{z \geq 0} \psi(z) = \min_{z \geq 0, z_j = 0} \psi(z)$).
Thus, the infimum exists and it is attained.
Taking the derivative w.r.t. $z_j$ (for $z_j > 0$) leads to the expression of the unique minimum, expression that still holds for $z_j=0$.
\end{proof}
To reformulate problem~$(2)$, let us consider the $|\G|$-tuple $(\etaG)_{G \in \G} \in \RR{r}{|\G|}$ of $r$-dimensional vectors $\etaG$ that satisfy for all $k \in \IntegerSet{r}$ and $G \in \G$, $\etaG_k \geq 0$. It follows from \MyLem{lem:EtaTrick} that
$$
	2 \sum_{k=1}^r \Omega^\alpha(V^k) = \min_{ (\etaG)_{G \in \G} \in \RR{r}{|\G|}_+ } 
						\sum_{k=1}^r \bigg[ 
									\|(\etaG_k)_{G \in \G}\|_{\beta} + 
                                              				\sum_{G\in\G}  \NormDeux{V^k \circ \dG}^2 (\etaG_k)^{-1}
						             \bigg].
$$
If we introduce the matrix $\zeta \in \RR{p}{r}$ defined by\footnote{For the sake of clarity, we do not specify the dependence of $\zeta$ on $(\etaG)_{G \in \G}$.}
$
	\zeta_{jk} \!\! = \!\! \big\{ \sum_{  G \in \G,\ G \ni j  }
                             (\dG_j)^2 (\etaG_k)^{-1} \big\}^{-1}
$, we then obtain
$$
	2 \sum_{k=1}^r \Omega^\alpha(V^k) =  \!\!\! \min_{ (\etaG)_{G \in \G} \in \RR{r}{|\G|}_+ }
					              \sum_{k=1}^r  (V^k)^\top \Diag{\zeta^k}^{-1} \! V^k + \|(\etaG_k)_{G \in \G}\|_{\beta} .
$$
This leads to the following formulation
\begin{equation}\label{eq:main_eq_eta}
      \min_{  \substack{U,\ V,\ \Omega_u(U^k) \leq 1  \\  (\etaG)_{G \in \G} \in \RR{r}{|\G|}_+ } }
                \frac{1}{2np} \NormFro{ X \! - \! U V^\top \! }^2 + \frac{\lambda}{2} \sum_{k=1}^r \bigg[ (V^k)^\top \Diag{\zeta^k}^{-1} \! V^k
		+ \|(\etaG_k)_{G \in \G}\|_{\beta} \bigg],
\end{equation}
which is equivalent to \MyEq{eq:main_eq_new_norm} and convex with respect to $V$.

\subsection{Sharing structure among dictionary elements}\label{sec:shared_structure}

So far, the regularization quasi-norm $\Omega^\alpha$ has been used to induce a structure \emph{inside} each dictionary element taken separately.
Nonetheless, some applications may also benefit from a control of the structure \emph{across} dictionary elements.
 For instance it can be desirable to impose the constraint that $r$ dictionary elements share only a few different nonzero patterns.  In the context of face recognition, this could be relevant to model the variability of faces as the combined variability of several parts, with each part having a small support (such as eyes), and having its variance itself explained by \textit{several} dictionary elements (corresponding for example to the color of the eyes).
 
To this end, we consider $\mathcal{M}$, a partition of $\IntegerSet{r}$. Imposing that two dictionary elements $V^k$ and $V^{k'}$ share the same sparsity pattern is equivalent to imposing that $V^k_i$ and $V^{k'}_i$ are simultaneously zero or non-zero. Following the approach used for joint feature selection (\cite{OboTasJor09})
where the $\ell_1$ norm is composed with an $\ell_2$ norm, we compose the norm $\Omega^\alpha$ with the $\ell_2$ norm $V^M_i=\|(V^k_i)_{k \in M}\|_2$, of all $i^{\text{th}}$ entries of each dictionary element of a class $M$ of the partition, leading to the regularization:
\begin{equation}
\label{eq:constrained_eta}
  \sum_{M \in \mathcal{M}} \Omega^\alpha(V^M_i)=\sum_{M \in \mathcal{M}} \left[ \sum_{G\in\G} \NormDeux{(V^k_i \dG_i)_{i \in G,\, k \in M}}^{\alpha} \right]^{1/\alpha},
\end{equation}

  In fact, not surprisingly given that similar results hold for the group Lasso \cite{bach2008cgl}, it can be shown that the above extension is equivalent to the variational formulation 
\begin{eqnarray*}
      \min_{  \substack{U,\ V,\ \Omega_u(U^k) \leq 1  \\  (\etaG)_{G \in \G} \in \RR{|\mathcal{M}|}{|\G|}_+ } } \ \ 
             \!\!\! \frac{1}{2np} \NormFro{ X \! - \! U V^\top \! }^2 & \!\! + \!\! & \frac{\lambda}{2} \sum_{M \in \mathcal{M}} \bigg[  
             							\sum_{k \in M} (V^k)^\top \Diag{\zeta^M}^{-1} \! V^k + \|(\etaG_M)_{G \in \G}\|_{\beta}
             							\bigg]
\end{eqnarray*}
with class specific variables $\eta_M$, $\zeta^M$, $M \! \in \! \mathcal{M}$, defined by analogy with $\eta_k$ and $\zeta^k$, $k \! \in \! \IntegerSet{r}$.

\subsection{Algorithm}\label{sec:algorithm}

The main optimization procedure described in \MyAlg{alg:main_loop} is based on a cyclic optimization over the three variables involved, namely $(\etaG)_{G \in \G}$, $U$ and $V$.
We use \MyLem{lem:EtaTrick} to solve \MyEq{eq:main_eq_new_norm} by a sequence of problems that are convex in $U$  for fixed $V$ (and conversely, convex in $V$ for fixed $U$).
For this sequence of problems, we then present efficient optimization procedures based on block coordinate descent (BCD) \cite[Section 2.7]{bertsekas1995np}.
We describe these in detail in \MyAlg{alg:main_loop}.
Note that we depart from the approach of \cite{GrosLasso} who use an active set algorithm. Their approach does not indeed allow warm restarts, which is crucial in our alternating optimization scheme.

\paragraph{Update of $(\etaG)_{G\in\G}$}
The update of $(\etaG)_{G\in\G}$ is straightforward (even if the underlying minimization problem is non-convex), since the minimizer $(\etaG)^*$ in \MyLem{lem:EtaTrick} is given in closed-form. 
In practice, as in \cite{micchelli2006lkf}, we avoid numerical instabilities near zero with the smoothed update $\etaG_k \leftarrow (\etaG_k)^* + \varepsilon$, with $\varepsilon \ll 1$.

\paragraph{Update of $U$}
The update of $U$ follows the technique suggested by \cite{JulienOnLine}. Each column $U^k$ of $U$ is constrained separately through $\Omega_u(U^k)$. Furthermore, if we assume that $V$ and $\{U^j\}_{j\neq k}$ are fixed, some basic algebra leads to
\begin{eqnarray}\label{eq:Uprojection}
\argmin_{ \Omega_u(U^k) \leq 1 } \frac{1}{2np} \NormFro{ X \! - \! U V^\top \! }^2 & = & \argmin_{\Omega_u(U^k) \leq 1 }
					\bigg\|  U^k \! - \! \NormDeux{V^k}^{-2} \!\!
						    (X \! - \! \sum_{j\neq k} [U^j]^\top V^j ) V^k \bigg\|_2^2  \\[-1mm]
						  & = & \argmin_{\Omega_u(U^k) \leq 1 }
					\NormDeux{  U^k \! -  w }^2,
\end{eqnarray}
which is simply the Euclidian projection $\Pi_{\Omega_u}(w)$ of $w$ onto the unit ball of $\Omega_u$. Consequently, the cost of the BCD update of $U$ depends on how fast we can perform this projection; the $\ell_1$ and $\ell_2$ norms are typical cases where the projection can be computed efficiently. In the experiments, we take $\Omega_u$ to be the $\ell_2$ norm.

In addition, since the function $U^k \mapsto \frac{1}{2np} \NormFro{ X \! - \! U V^\top \! }^2$ is continuously differentiable on the (closed convex) unit ball of $\Omega_u$, the convergence of the BCD procedure is guaranteed since the minimum in \MyEq{eq:Uprojection} is unique \cite[Proposition 2.7.1]{bertsekas1995np}. The complete update of $U$ is given in \MyAlg{alg:main_loop}.


\paragraph{Update of $V$}
A fairly natural way to update $V$ would be to compute the closed form solutions available for each row of $V$.
Indeed, both the loss $\frac{1}{2np} \NormFro{ X \! - \! U V^\top \! }^2$ and the penalization on $V$ are separable in the rows of $V$, leading to $p$ independent ridge-regression problems, implying in turn $p$ matrix inversions.

However, in light of the update of $U$, we consider again a BCD scheme on the columns of $V$ that turns out to be much more efficient, without requiring any non-diagonal matrix inversion. The detailed procedure is given in \MyAlg{alg:main_loop}. The convergence follows along the same arguments as those used for $U$.

\begin{algorithm}[!ht]
   \caption{ $\, $ Main optimization procedure for solving \MyEq{eq:main_eq_eta}.}
   \label{alg:main_loop}
	\begin{algorithmic}
	\STATE {\bfseries Input:} Dictionary size $r$, data matrix $X$.
	\STATE {\bfseries Initialization:} Random initialization of $U,V$.
	\STATE {\bfseries $\qquad$ while }( \textit{stopping criterion} not reached )
		\STATE $\qquad \qquad$ \textbf{Update} $(\etaG)_{G \in \G}$: closed-form solution given by \MyLem{lem:EtaTrick}.
		\STATE $\qquad \qquad$ \textbf{Update} $U$ by BCD:
			\STATE {\bfseries $\qquad \qquad \qquad$ for $t=1$ {\bfseries to} $T_u$, $\,$ for $k=1$ {\bfseries to} $r$: }
				\STATE $\qquad \qquad \qquad \qquad
					U^k \leftarrow \Pi_{\Omega_u}(U^k \! + \! \NormDeux{V^k}^{-2} \!\! ( X V^k \! - \! U V^\top V^k ) )$.
		\STATE $\qquad \qquad$ \textbf{Update} $V$ by BCD:
			\STATE {\bfseries $\qquad \qquad \qquad$ for $t=1$ {\bfseries to} $T_v$, $\,$ for $k=1$ {\bfseries to} $r$:}
				\STATE $\qquad \qquad \qquad \qquad$
							    $V^k \!\! \leftarrow \!\!  \Diag{\zeta^k} \Diag{\NormDeux{U^k}^2 \zeta^k \! + \! n p \lambda \mathbf{1}}^{-1} \!\!\!\!\!
				                            ( X^\top U^k \! - V U^\top U^k \! + \NormDeux{U^k}^2 V^k )$.
	\STATE {\bfseries Output:}  Decomposition $U,V$.
	\end{algorithmic}
\end{algorithm}

Our problem is not \textit{jointly} convex in $(\etaG)_{G \in \G}$, $U$ and $V$, which raises the question of the sensitivity of the optimization to its initialization. This point will be discussed in the experiments, \MySec{sec:experiments}.
In practice, the stopping criterion relies on the relative decrease (typically $10^{-3}$) in the cost function in \MyEq{eq:main_eq_new_norm}.

\paragraph{Algorithmic complexity} The complexity of \MyAlg{alg:main_loop} can be decomposed into 3 terms, corresponding to the update procedures of $(\etaG)_{G\in\G},\ U$ and $V$.
We denote by $T_u$ (respectively $T_v$) the number of updates of $U$ (respectively $V$) in \MyAlg{alg:main_loop}.
First, computing $(\etaG)_{G\in\G}$ and $\zeta$ costs
$O( r |\G| + |\G| \sum_{G \in \G}|G| + r \sum_{j=1}^p |G\in\G ; G \ni j|)=O( p r |\G| + p|\G|^2)$.
The update of $U$ requires
$O( (p+T_u n) r^2 + (n p +  C_{\Pi} T_u ) r )$ operations,
where
$C_{\Pi}$
is the cost of projecting onto the unit ball of $\Omega_u$.
Similarly, we get for the update of $V$ a complexity of $O( (n + T_v p) r^2 + n p r)$.
In practice, we notice that the BCD updates for both $U$ and $V$ require only few steps, so that we choose $T_u=T_v=3$.
In our experiments, the algorithmic complexity simplifies to $O( p^2 + r^2 \max\{n, p\} + r p \max\{p^{1/2},n\} )$ times the number of iterations in \MyAlg{alg:main_loop}.

\paragraph{Extension to NMF}
Our formalism does not cover the positivity constraints of non-negative matrix factorization, but it is straightforward to extend it at the cost of an additional threshold operation (to project onto the positive orthant) in the BCD updates of $U$ and $V$.


\section{Experiments}\label{sec:experiments}

We first focus on the application of SSPCA to a face recognition problem and we show that, by adding a sparse structured prior instead of a simple sparse prior, we gain in robustness to occlusions.
We then apply SSPCA to biological data to study the dynamics of a protein/protein complex. 

\begin{figure}[!h]

	\begin{center}
	\begin{tabular}{c}
	\includegraphics[scale=0.9]{./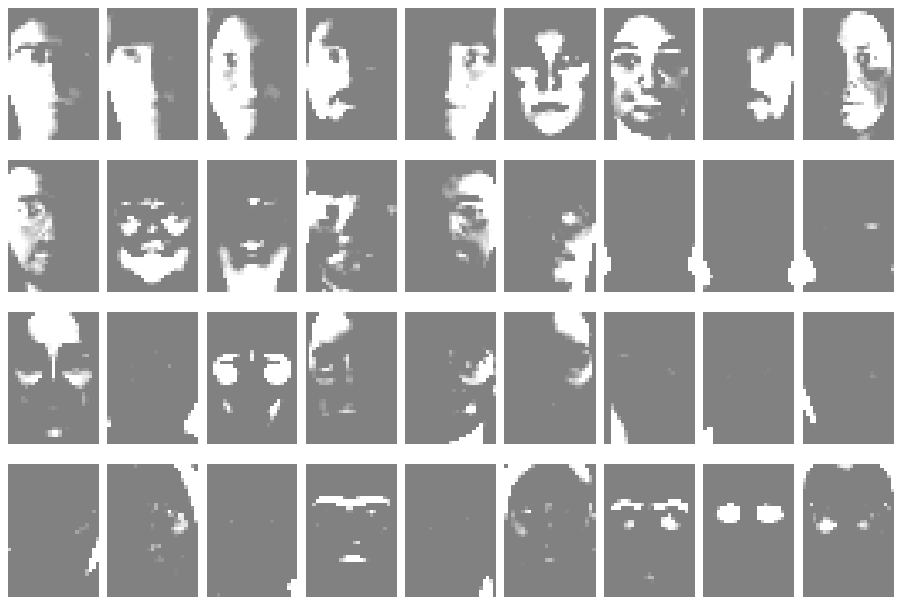}\\
	\\
	\includegraphics[scale=0.9]{./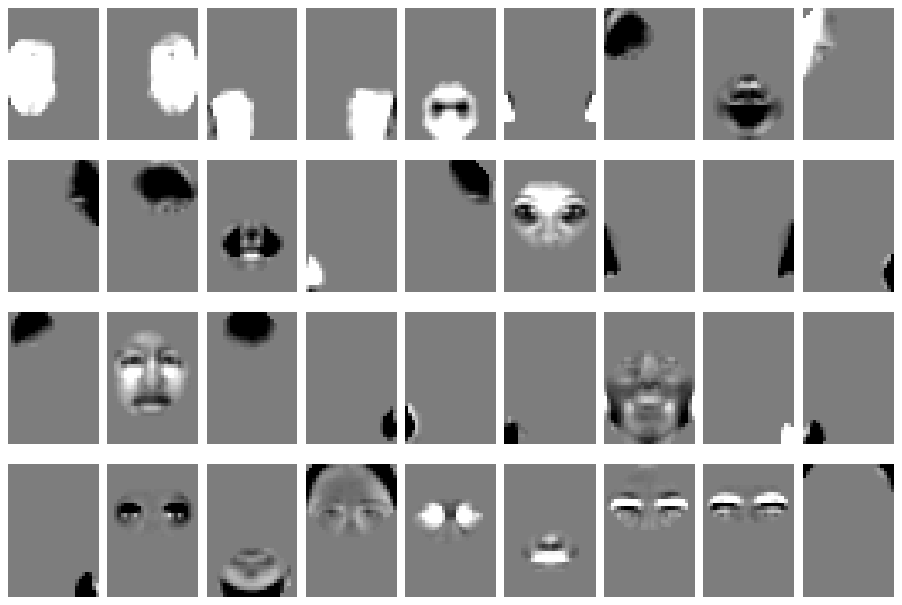}\\
	\\
	\includegraphics[scale=0.9]{./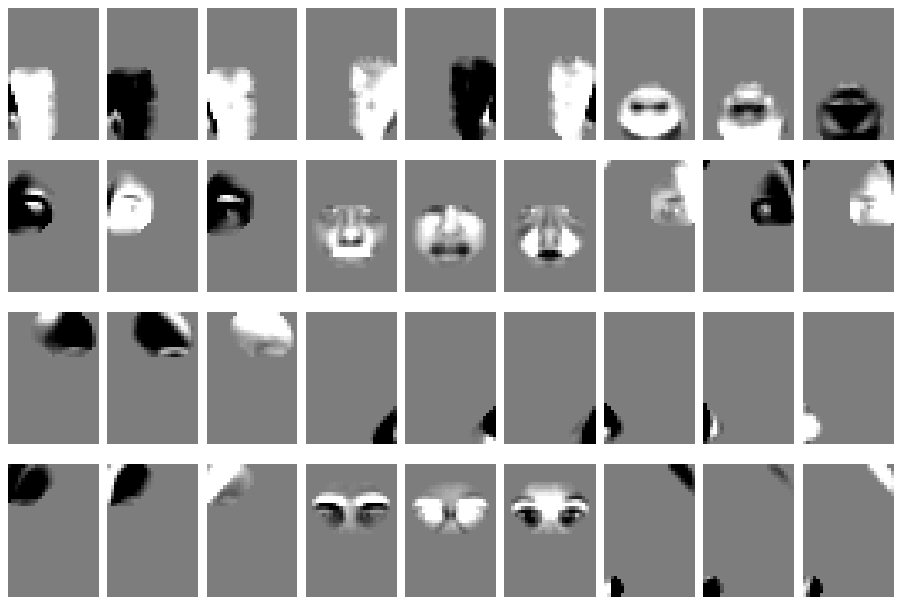}
	\end{tabular}
	\end{center}

\caption{ Three learned dictionaries of faces with $r=36$:  NMF (top), SSPCA (middle) and shared-SSPCA (bottom) (i.e., SSPCA with $|\mathcal{M}|=12$ different patterns of size $3$). The dictionary elements are sorted in decreasing order of variance explained. While NMF gives sparse spatially unconstrained patterns, SSPCA finds convex areas that correspond to more natural face segments. SSPCA captures the left/right illuminations in the dataset by recovering symmetric patterns.}
\label{fig:face_dictionary_examples}
\end{figure}

The results we obtain are validated by known properties of the complex.
In preliminary experiments, we considered the exact regularization of \cite{GrosLasso}, i.e., with $\alpha=1$, but found that the obtained patterns were not sufficiently sparse and salient.
We therefore turned to the setting where the parameter $\alpha$ is in $(0,1)$. In the experiments described in this section we chose $\alpha=0.5$.

\subsection{Face recognition}

We first apply SSPCA on the cropped AR Face Database \cite{ARdataset} that consists of 2600 face images, corresponding to 100 individuals (50 women and 50 men). For each subject, there are 14 non-occluded poses and 12 occluded ones (the occlusions are due to sunglasses and scarfs). We reduce the resolution of the images from 165x120 to 38x27 for computational reasons.

\MyFig{fig:face_dictionary_examples} shows examples of learned dictionaries (for $r=36$ elements), for NMF, SSPCA and SSPCA with shared structure. While NMF finds sparse but spatially unconstrained patterns, SSPCA select sparse convex areas that correspond to a more natural segment of faces. For instance, meaningful parts such as the mouth and the eyes are recovered by the dictionary.

\begin{figure}[!h]
	\begin{center}
        \includegraphics[scale=0.55]{./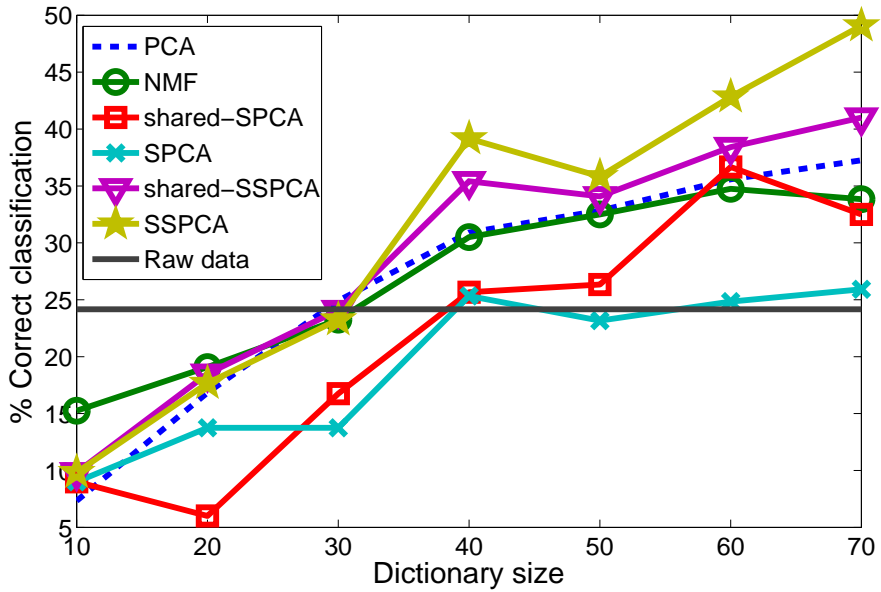}
	\end{center}

\caption{Correct classification rate vs. dictionary size:  each dimensionality reduction technique is used with k-NN to classify occluded faces. SSPCA shows better robustness to occlusions.}
\label{fig:classification_scores}
\end{figure}

We now compare SSPCA, SPCA (as in \cite{lee}), PCA and NMF on a face recognition problem. We first split the data into 2 parts, the occluded faces and non-occluded ones.
For different sizes of the dictionary,
we apply each of the aforementioned dimensionality reduction techniques to the non-occluded faces. Keeping the learned dictionary $V$, we decompose both non-occluded and occluded faces on $V$. We then classify the occluded faces with a k-nearest-neighbors classifier (k-NN), based on the obtained low-dimensional representations.
Given the size of the dictionary, we choose the number of nearest neighbor(s) and the amount of regularization $\lambda$ by 5-fold cross-validation\footnote{In the 5-fold cross-validation, the number of nearest neighbor(s) is searched in $\{1,3,5\}$ while $\log_2(\lambda)$ is in $\{4,6,8,\dots,18\}$. For the dictionary, we consider the sizes $r \in \{10, 20, 30, 40, 50, 60, 70\}$.}.

The formulations of NMF, SPCA and SSPCA are non-convex and as a consequence, the local minima reached by those methods are sensitive to the initialization.
Thus, after having selected the parameters by cross-validation, we run each algorithm 20 times with different initializations on the non-occluded faces, divided into a training (900 instances) and validation set (500 instances) and take the model with the best classification score. We summarize the results in \MyFig{fig:classification_scores}. We denote by shared-SSPCA (resp. shared-SPCA) the models where we impose, on top of the structure of $\Omega^{\alpha}$, to have only 10 different nonzero patterns among the learned dictionaries (see \MySec{sec:shared_structure}).

As a baseline, we also plot the classification score that we obtain when we directly apply k-NN on the raw data, without preprocessing. Because of its local dictionary, SSPCA proves to be more robust to occlusions and therefore outperforms the other methods on this classification task.
On the other hand, SPCA, that yields sparsity without a structure prior, performs poorly.
Sharing structure across the dictionary elements (see \MySec{sec:shared_structure}) seems to help SPCA for which no structure information is otherwise available.

The goal of our paper is not to compete with state-of-the-art techniques of face recognition, but to demonstrate the improvement obtained between $\ell_1$ and more structured norms. We could still improve upon our results using non-linear classification (e.g., with a SVM) or by refining our features (e.g., with a Laplacian filter).


\subsection{Protein complex dynamics}

Understanding the dynamics of protein complexes is important since conformational changes of the complex are responsible for modification of the biological function of the proteins in the complex.
For the EF-CAM complex we consider, it is of particular interest to study the interaction between EF (adenyl cyclase of \textit{Bacillus anthracis}) and CAM (calmodulin) to find new ways to block the action of anthrax \cite{Protein}.

\begin{figure}[!h]

	\begin{center}
	\begin{tabular}{cc}
	\includegraphics[scale=.7]{./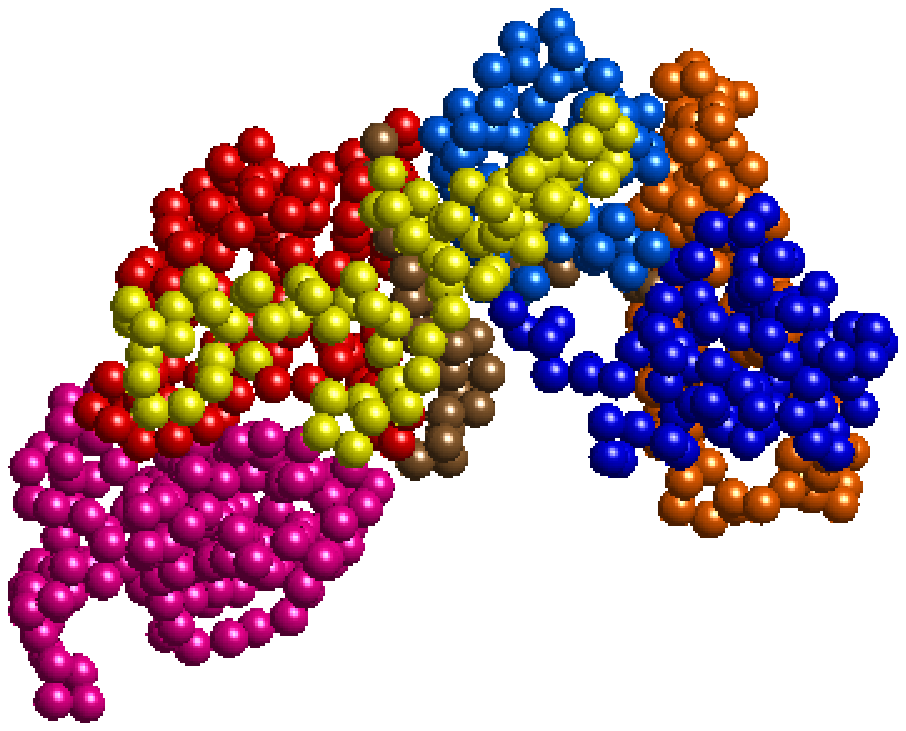} &
        \includegraphics[scale=.7]{./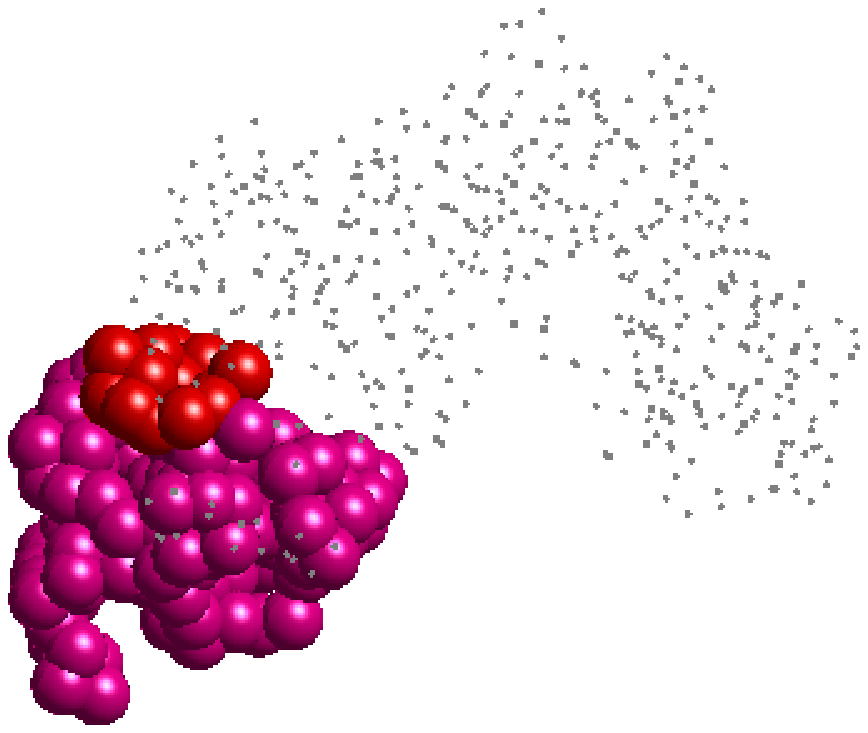} \\
	\includegraphics[scale=.7]{./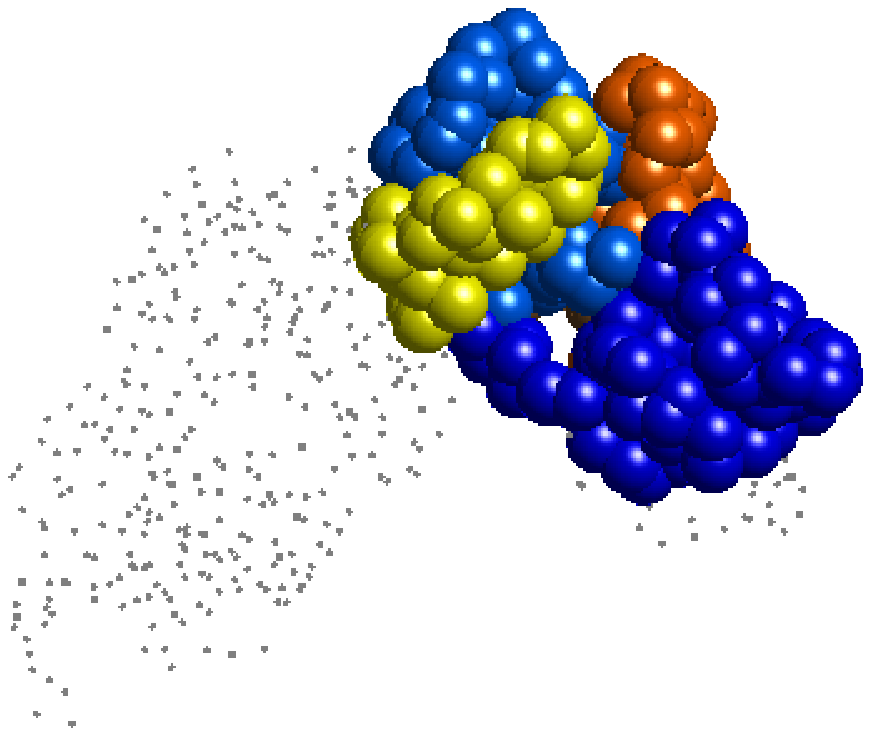} &
	\includegraphics[scale=.7]{./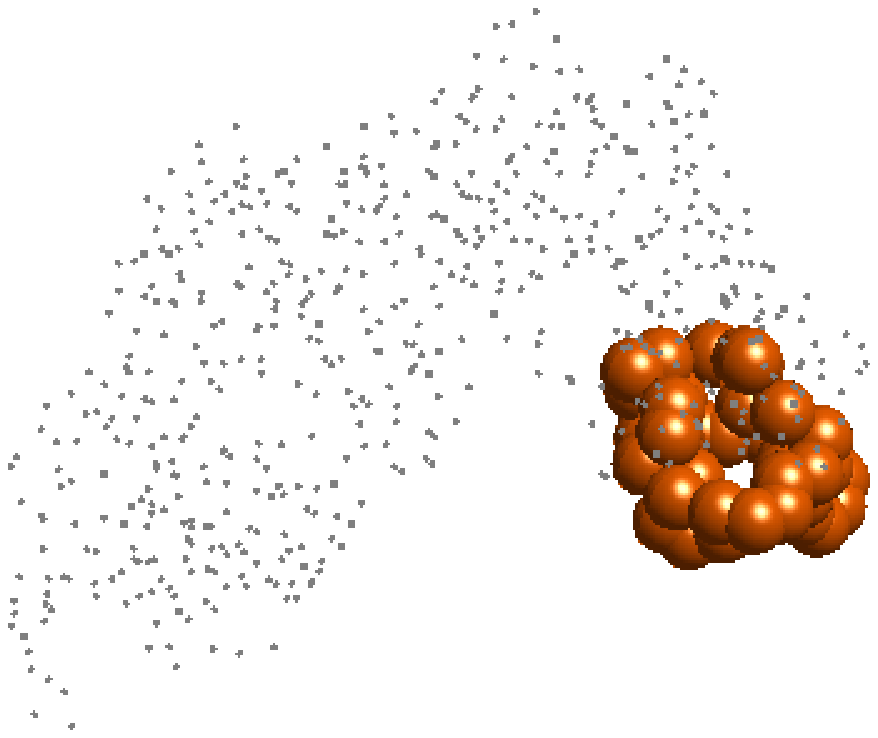}
	\end{tabular}
	\end{center}
	
\caption{{ (Top left) Entire protein with biological domains highlighted in different colors. The two blue parts represent the CAM protein, while the rest of the complex corresponds to EF. (Top right, bottom left/right) Dictionary of size $r=3$ found by SSPCA with the same color code.}}
\label{fig:protein_examples}
\end{figure}

In our experiment, we consider 12000 successive positions of the 619 residues of the EF-CAM complex, each residue being represented by its position in $\mathbb{R}^3$ (i.e., a total of 1857 variables).
We look for dictionary elements that explain the dynamics of the complex, with the constraint that these dictionary elements have to be small convex regions in space. Indeed, the complex is comprised of several functional domains (see \MyFig{fig:protein_examples}) whose spatial structure has to be preserved by our decomposition.

We use the norm $\Omega^{\alpha}$ (and $\G$) to take into account the spatial configuration. Thus, we naturally extend the groups designed for a 2-dimensional grid (see \MyFig{fig:rectangular_group_illustration}) to a 3-dimensional setting.
Since one residue corresponds to 3 variables, we could either
(1) aggregate these variables into a single one and consider a 619-dimensional problem, or
(2) we could use \MySec{sec:shared_structure} to force, for each residue, the decompositions of all three coordinates to share the same support, i.e., in a 1857-dimensional problem. This second method has given us more satisfactory results.

We only present results on a small dictionary (see \MyFig{fig:protein_examples} with $r=3$). As a heuristic to pick $\lambda$, we maximize
{\footnotesize $|\bigcup_{k=1}^r \! \Support{V^k}|^2/ ( p \sum_{k=1}^r \! |\Support{V^k}| )$}
to select dictionary elements that cover pretty well the complex, without too many overlapping areas.

We retrieve groups of residues that match known energetically stable substructures of the complex \cite{Protein} .
In particular, we recover the two tails of the complex and the interface between EF and CAM where the two proteins bind.
Finally, we also run our method on the same EF-CAM complex perturbed by (2 and 4) calcium elements.
Interestingly, we observe stable decompositions, which is in agreement with the analysis of \cite{Protein}.

\vspace*{1cm}


\section{Conclusions}

We proposed to apply a non-convex variant of the regularization introduced by \cite{GrosLasso} to the problem of structured sparse dictionary learning.
We present an efficient block-coordinate descent algorithm with closed-form updates.
For face recognition, the dictionaries learned have increased robustness to occlusions compared to NMF.
An application to the analysis of protein complexes reveals biologically meaningful structures of the complex.
As future directions, we plan to refine our optimization scheme to better exploit sparsity.
We also intend to apply this structured sparsity-inducing norm for multi-task learning, in order to take advantage of the structure between tasks.


\section*{Acknowledgments}

We would like to thank Elodie Laine, Arnaud Blondel and Th\'er\`ese Malliavin from the Unit\'e de Bioinformatique
Structurale, URA CNRS 2185 at Institut Pasteur, Paris for proposing the analysis of the EF-CAM complex and fruitful discussions on protein complex dynamics.
We also thank Julien Mairal for sharing his insights on dictionary learning.


\bibliographystyle{unsrt}
\bibliography{StructuredSparsePCA}

\end{document}